\newtheorem{lmm}{Lemma}
\newtheorem{crl}{Corollary}
\newcommand {\I} {\ensuremath {\mathbf{1\hspace{-5.5pt}1}}}
\newcommand{\keywords}[1]{\par\addvspace\baselineskip
\noindent\keywordname\enspace\ignorespaces#1}
\renewcommand{\vec}[1]{\ensuremath{\boldsymbol{#1}}}
\newcommand{\y}{\ensuremath{\vec{y}}}
\begin{document}

\mainmatter  

\title{Output-Sensitive Adaptive Metropolis-Hastings for Probabilistic Programs}


%
%
\author{David Tolpin \and Jan Willem van de Meent \and Brooks Paige \and Frank Wood}


\institute{University of Oxford \\ Department of Engineering Science \\
\mailsa}

%
%

\toctitle{Lecture Notes in Computer Science}
\tocauthor{Authors' Instructions}
\maketitle

\begin{abstract}

We introduce an adaptive output-sensitive
Metro\-polis-Hast\-ings algorithm for probabilistic models
expressed as programs, Adaptive Lightweight
Metro\-polis-Hast\-ings (AdLMH). The algorithm extends
Light\-weight Metro\-polis-Hast\-ings (LMH) by adjusting the
probabilities of proposing random variables for modification to
improve convergence of the program output. We show that AdLMH
converges to the correct equilibrium distribution and compare
convergence of AdLMH to that of LMH on several test problems to
highlight different aspects of the adaptation scheme. We observe
consistent improvement in convergence on the test problems.

\keywords{Probabilistic programming, adaptive MCMC}
\end{abstract}

\section{Introduction}
    
One strategy for improving convergence of Markov Chain Monte Carlo (MCMC) samplers is through
online adaptation of the proposal distribution~\cite{AT08,AFM+11,RR09}.
An adaptation scheme must ensure that the sample sequence converges to the correct equilibrium
distribution. In a componentwise updating Metropolis-Hastings
MCMC, i.e. Metro\-polis-within-Gibbs~\cite{GL06,LRR13,LRY+05},
the proposal distribution can be decomposed into two components:
\begin{enumerate}
\item A stochastic schedule (probability distribution) for selecting
  the next random variable for modification.
\item The kernels from which new values for each of the variables are proposed.
\end{enumerate}
In this paper we concentrate on the first component---adapting the
schedule for selecting a variable for modification.

Our primary interest in this work is to improve MCMC methods for probabilistic
programming~\cite{GMR+08,GHNR14,MSP14,WVM14}. Probabilistic programming languages
facilitate development of general probabilistic models using the expressive
power of general programming languages. The goal of inference in such programs
is to reason about the posterior distribution over random variates that are
sampled during execution, conditioned on observed values that constrain
a subset of program expressions.

Lightweight Metropolis-Hastings (LMH) samplers \cite{WSG11} propose a change to
a single random variable at each iteration. The program is then rerun, reusing
previous values and computation where possible, after which the new set of sample values
is accepted or rejected. While re-running the program each time may waste some
computation, the simplicity of LMH makes developing probabilistic
variants of arbitrary languages relatively straightforward.

Designing robust Adaptive MCMC methods for probabilistic programming is complicated
because of diversity of models expressed by probabilistic programs. The same adaption
scheme should perform well with different programs without manual tuning.
Here we present an adaptive variant of LMH, which dynamically adjusts the schedule for
selecting variables for modification. First, we review the general
structure of a probabilistic program. We discuss convergence criteria with
respect to the program output and propose a scheme for tracking the ``influence''
of each random variable on the output. We then adapt the selection probability
for each variable, borrowing techniques from the upper confidence bound (UCB) family of algorithms for
multi-armed bandits \cite{ANF02}. We show that the proposed adaptation scheme
preserves convergence to the target distribution under reasonable assumptions.
Finally, we compare original
and Adaptive LMH on several test problems to show how convergence is improved
by adaptation. 

\section{Preliminaries}

\subsection{Probabilistic Program}

A probabilistic program is a stateful deterministic computation
$\mathcal{P}$ with the following properties:
\begin{itemize}
\item Initially, $\mathcal{P}$ expects no arguments.
\item On every call, $\mathcal{P}$ returns either a distribution $F$, a
    distribution and a value $(G, y)$, a value $z$, or $\bot$.
\item Upon returning $F$, $\mathcal{P}$ expects a value $x$ drawn from $F$
as the argument to the next call.
\item Upon returning $(G, y)$ or $z$, $\mathcal{P}$ is invoked again
    without arguments.
\item Upon returning $\bot$, $\mathcal{P}$ terminates.
\end{itemize}
A program is run by calling $\mathcal{P}$ repeatedly until termination.

Every run of the program implicitly produces a sequence of
pairs $(F_i, x_i)$ of distributions and values
of \textit{latent} random variables. We call this
sequence a \textit{trace} and denote it by $\pmb{x}$. 
A trace induces a sequence of pairs
$(G_j, y_j)$ of distributions and values of
\textit{observed} random variables. We call this
sequence an \textit{image} and denote it by $\pmb{y}$. We call a sequence of
values $z_k$ an \textit{output} of the program and denote it by
$\pmb{z}$. Program output is deterministic given the trace.

The probability of a trace is proportional to the product of the
probability of all random choices $\pmb{x}$ and the likelihood of all
observations
$\pmb{y}$
\begin{equation}
	p(\pmb{x}) \propto \prod_{i=1}^{\left|\pmb{x}\right|}
    p_{F_i}(x_i) \prod_{j=1}^{\left|\pmb{y}\right|}p_{G_j}(y_{j}).
  \label{eqn:p-trace}
\end{equation}
The objective of inference in probabilistic program $\mathcal{P}$
is to discover the distribution of $\pmb{z}$.

\subsection{Adaptive Markov Chain Monte Carlo}

MCMC methods generate a sequence of samples
$\{\pmb{x}^{t}\}_{t=1}^{\infty}$ by simulating a Markov chain
using a transition operator that leaves a
target density $\pi(\pmb{x})$ invariant. In MH the transition operator is
implemented by drawing a new sample $\pmb{x}'$  from a parameterized
proposal distribution $q_\theta(\pmb{x}'|\pmb{x}^{t})$ that is
conditioned on the current sample $\pmb{x}^t$. The proposed sample is then
accepted with probability
\begin{equation}
    \rho=
    \min\left(\frac {\pi(\pmb{x}')q_\theta(\pmb{x}^{t}|\pmb{x}')}
                    {\pi(\pmb{x}^{t})q_\theta(\pmb{x}'|\pmb{x}^{t})},
              1\right)
    .
    \label{eqn:mh-rho}
\end{equation}
If $\pmb{x}'$ is rejected, $\pmb{x}^t$ is re-used as the
next sample.

The convergence rate of MH depends on parameters $\theta$ of the
proposal distribution $q_\theta$. The parameters can be set
either offline or online. Variants of MCMC in which the parameters
are continuously adjusted based on the features of the sample sequence
are called adaptive. Challenges in design and analysis of
Adaptive MCMC methods include optimization criteria and algorithms
for the parameter adaptation, as well as conditions of convergence of
Adaptive MCMC to the correct equilibrium distribution \cite{RR07}.
Continuous adaptation of parameters of the proposal distribution is 
a well-known research subject~\cite{AT08,AFM+11,RR09}.

In a componentwise MH algorithm~\cite{LRY+05} which targets a density 
$\pi(\pmb{x})$ defined on an $N$-dimensional space $\mathcal{X}$,
the components of a random 
sample $\pmb{x} = \{x_1, \ldots, x_N\}$ are updated individually, in either
random or systematic order. Assuming the component $i$ is selected at the
step $t$ for modification, the proposal $\pmb{x}'$ sampled from $q^i_\theta(\pmb{x}|\pmb{x}^t)$
may differ
from $\pmb{x}^t$ only in that component, and $x_j' =x_j^{t}$ for all $j\ne
i$. Adaptive componentwise Metropolis-Hastings (Algorithm~\ref{alg:amh})
chooses different probabilities for selecting a component for modification at
each iteration. Parameters of this scheduling distribution may be viewed as a
subset of parameters $\theta$ of the proposal distribution $q_\theta$, and
adjusted according to optimization criteria of the sampling algorithm.

\begin{algorithm}[t]
    \begin{algorithmic}[1]
        \STATE Select initial point $\pmb{x}^1$.
        \STATE Set initial selection probabilities $\pmb{\alpha}^1$.
        \FOR {$t = 1 \ldots \infty$}
        \STATE $\pmb{\alpha}^t \gets f^t(\pmb{\alpha}^{t},
        \pmb{x}^0, \pmb{x}^1, \ldots, \pmb{x}^{t})$.
          \STATE Choose $k \in \{1,\ldots,N\}$ with probability $\alpha_k^t$.
          \STATE Generate $\pmb{x}' \sim q^k_\theta(\pmb{x}|\pmb{x}^{t})$.
          \STATE $\rho \gets 
                  \min\left(\frac {\pi(\pmb{x}')q^k_\theta(\pmb{x}^{t}|\pmb{x}')}
                      {\pi(\pmb{x}^{t})q^k_\theta(\pmb{x}'|\pmb{x}^{t})},
                  1\right)$
          \STATE $\pmb{x}^{t+1} \gets$ $\pmb{x}'$ with probability $\rho$,
                                       $\pmb{x}^{t}$ otherwise.
        \ENDFOR
    \end{algorithmic}
    \caption{Adaptive componentwise MH}
    \label{alg:amh}
\end{algorithm}

Varying selection probabilities based on past samples violates the
Markov property of $\{{\pmb{x}^t}\}_1^\infty$. 
However, provided the adaptation of the selection probabilities diminishes, 
with $||\pmb{\alpha}^t -\pmb{\alpha}^{t-1}|| \rightarrow 0$,
then under suitable regularity conditions for the target density
(see Section~\ref{sec:convergence})
an adaptive componentwise MH algorithm will still be ergodic \cite{LRR13},
and the distribution on $\pmb{x}$ induced by Algorithm~\ref{alg:amh} 
converges to $\pi$.

\subsection{Lightweight Metropolis-Hastings}

LMH~\cite{WSG11} is a sampling scheme for probabilistic programs where a single
random variable drawn in the course of a particular execution of a
probabilistic program is modified via a standard MH proposal, and this
modification is accepted by comparing the values of the joint
probability of old and new program traces. LMH differs from
componentwise MH algorithms in that other random variables may also
have to be modified, depending on the structural dependencies in
the probabilistic program.

LMH initializes a proposal by selecting a single variable $x_k$ from an
execution trace ${\pmb x}$ and resampling its value $x'_k$ either using a
reversible kernel $\kappa(x'_k|x_k)$ or from the conditional prior distribution.
Starting from this
initialization, the program is rerun to generate a new trace ${\pmb x}'$. For
each $m > k$, the previous value $x_m$ is reused, provided it still lies
in the support of the distribution on $x'_m$, rescoring its $\log$
probability relative to the new random choices $\{x'_1, \ldots,
x'_{m-1}\}$. When $x_m'$ cannot be rescored, a new value is sampled
from the prior on $x'_m$, conditioned on preceding choices. The acceptance
probability $\rho_\mathrm{LMH}$ is obtained by substituting 
(\ref{eqn:p-trace}) into (\ref{eqn:mh-rho}):
\begin{equation}
    \rho_\mathrm{LMH}=\min \left(1,
        \frac {p(\pmb{y}'|\pmb{x}') p(\pmb{x}')q(\pmb{x}|\pmb{x}')}
            {p(\pmb{y }|\pmb{x }) p(\pmb{x })q(\pmb{x}'|\pmb{x})}
                           \right)
    .
\label{eqn:rho-lmh}
\end{equation}
We here further simplify LMH by assuming $x'_k$ is sampled from the prior conditioned on
earlier choices and that all variables are selected for modification with equal
probability. In this case, $\rho_\mathrm{LMH}$ takes the form~\cite{WVM14}
\begin{equation}
    \rho_\mathrm{LMH}=\min \left(1,
        \frac {p(\pmb{y}'|\pmb{x}') p(\pmb{x}')|\pmb{x}|
               p(\pmb{x}\!\setminus\!\pmb{x}'|\pmb{x} \cap \pmb{x}')}
              {p(\pmb{y}|\pmb{x}) p(\pmb{x})|\pmb{x}'|
              p(\pmb{x}'\!\setminus\!\pmb{x}|\pmb{x}' \cap \pmb{x})}
           \right)
           ,
\label{eqn:rho-lmh-simple}
\end{equation}
where $\pmb{x}'\!\setminus\!\pmb{x}$ denotes the resampled variables,
and $\pmb{x}' \cap \pmb{x}$ denotes the variables which have the same
values in both traces.

\section{Adaptive Lightweight Metropolis-Hastings}

We develop an adaptive variant of LMH, which dynamically adjusts the probabilities
of selecting variables for modification (Algorithm~\ref{alg:adlmh}).
Let $\pmb{x}^t$ be the trace at iteration $t$ of Adaptive LMH.
We define the probability distribution of selecting variables for modification
in terms of a weight vector $\pmb{W}^t$ that we adapt, such
that the probability $\alpha^t_i$ of selecting the $i^{th}$ variable for
modification is
\begin{equation}
    \alpha^t_i=\frac {W_i^t} {\sum\limits_{k=1} ^{|\pmb{x}^t|} W_k^t}.
\label{eqn:alpha-def}
\end{equation}

\begin{algorithm}[t]
\begin{algorithmic}[1]
    \STATE Initialize $\pmb{W}^0$ to a constant.
\STATE {Run the program.}
\FOR {$t = 1 \ldots \infty$}
  \STATE {Randomly select a variable $x^t_k$ according to $\pmb{W}^t$.} \label{alg:adlmh-propose}
  \STATE {Propose a value for $x^t_k$.}
  \STATE {Run the program, accept or reject the trace.}
  \IF {accepted}
      \STATE {Compute $\pmb{W}^{t+1}$ based on the program output.} \label{alg:adlmh-compute-w}
  \ELSE
      \STATE $\pmb{W}^{t+1} \gets \pmb{W}^{t}$
  \ENDIF
\ENDFOR
\end{algorithmic}
\caption{Adaptive LMH}
\label{alg:adlmh}
\end{algorithm}

Just like LMH, Adaptive LMH runs the probabilistic program once and then
selects variables for modification randomly.
However, acceptance ratio $\rho_\mathrm{AdLMH}$ must now include
selection probabilities $\alpha_k$ and $\alpha_k'$ of the resampled
variable in the current and the proposed sample

\begin{equation}
    \rho_\mathrm{AdLMH}=\min \left(1,
        \frac {p(\pmb{y}'|\pmb{x}') p(\pmb{x}') \alpha_k'
               p(\pmb{x}\!\setminus\!\pmb{x}'|\pmb{x} \cap \pmb{x}')}
              {p(\pmb{y}|\pmb{x}) p(\pmb{x}) \alpha_k
              p(\pmb{x}'\!\setminus\!\pmb{x}|\pmb{x}' \cap \pmb{x})}
           \right)
    .
\label{eqn:rho-almh}
\end{equation}

This high-level description of the algorithm does not detail how $\pmb{W}^t$ is
computed for each iteration. Indeed, this is the most essential part of
the algorithm. There are two different aspects here --- on one hand, the
influence of a given choice on the output sequence must be quantified in
terms of convergence of the sequence to the target distribution. On the other
hand, the influence of the choice must be translated into re-computation of
weights of random variables in the trace. Both parts of re-computation of $\pmb{W}^t$
are explained below.

\subsection{Quantifying Influence}

Extensive research literature is available on criteria for tuning parameters of
Adaptive MCMC~\cite{AT08,AFM+11,RR09}. The case of inference in probabilistic
programs is different: the user of a probabilistic program is interested
in fast convergence of the program output $\{\pmb{z}^t\}$ rather than of the
vector of the program's random variables $\{\pmb{x}^t\}$.

In adaptive MCMC variants the acceptance rate can be efficiently used as the
optimization objective~\cite{RR09}. However, for convergence of the output
sequence an accepted trace that produces the same output is indistinguishable
from a rejected trace. Additionally, while optimal values of the acceptance
rate~\cite{AT08,RR09} can be used to tune parameters in adaptive MCMC, in
Adaptive LMH we do not change the parameters of proposal distributions of
individual variables, and assume that they are fixed. However, proposing
a new value for  a random variable may or may not change the output even
if the new trace is accepted. By changing variable selection
probabilities we attempt to maximize the change in the output sequence
so that it converges faster. In the pedagogical example
\begin{align*}
	x_1\sim& \mathrm{Bernoulli}(0.5),\quad x_2\sim \mathcal{N}(x_1, 1),\\
	z_1 \gets& (x_1, x_2),
\end{align*}
selecting the Bernoulli random choice for modification changes the output only
when a different value is sampled, while selecting the normal random choice
will change the output almost always.

Based on these considerations, we quantify the influence of sampling on
the output sequence by measuring the change in the output $\pmb{z}$ of
the probabilistic program. Since probabilistic programs may produce
output of any type, we chose to discern between identical and different
outputs only, rather than to quantify the distance by introducing a
type-dependent norm. In addition, when $|\pmb{z}|>1$, we quantify the
difference by the fraction of components of $\pmb{z}$  with changed values.

Formally, let
$\{\pmb{z}^t\}_1^\infty=\{\pmb{z}^1,\ldots,\pmb{z}^{t-1},\pmb{z}^t,\ldots\}$
be the output sequence of a probabilistic program. Then the
\textit{influence} of a choice that produced $\pmb{z}^t$ is defined by the
total reward $R^t$, computed as normalized Hamming distance between
the outputs
\begin{equation}
    R^t = \frac 1 {|\pmb{z}^t|}\sum_{i=k}^{|\pmb{z}^t|} \I(z_k^t \ne z_k^{t-1}).
\label{eqn:sample-reward}
\end{equation}
In the case of a scalar $\pmb{z}$, the reward is 1 when outputs of subsequent
samples are different, 0 otherwise.

The reward is used to adjust the variable selection probabilities for the
subsequent steps of Adaptive LMH by computing $\pmb{W}^{t+1}$
(line~\ref{alg:adlmh-compute-w} of Algorithm~\ref{alg:adlmh}). It may seem
sufficient to assign the reward to the last choice and use average choice
rewards as their weights, but  this approach will not work for
Adaptive LMH.  Consider the generative model
\begin{align*}
	x_1 \sim& \mathcal{N}(1, 10), \quad x_2 \sim \mathcal{N}(x_1, 1), \\
	y_1 \sim& \mathcal{N}(x_2, 1), \\
	z_1 \gets& x_1,
\end{align*}
where we observe the value $y_1 = 2$. Modifying $x_2$ may result in an accepted trace, but the value of
$z_1=x_1$, predicted by the program, will remain the same as in the
previous trace. Only when $x_1$ is also modified, and a new trace with
the updated values for both $x_1$ and $x_2$ is accepted, the
earlier change in $x_2$ is indirectly reflected in the output of the program.
In the next section, we discuss propagation of rewards to variable
selection probabilities in detail.

\subsection{Propagating Rewards to Variables}

Both LMH and Adaptive LMH modify a single variable per trace, and either
re-use or recompute the probabilities of values of all other variables
(except those absent from the previous trace or having an incompatible
distribution, for which new values are also sampled).  Due to this
updating scheme, the influence of modifying a variable on the
output can be delayed by several iterations. We propose the following
propagation scheme: for each random variable $x_i$, the 
reward $r_i$ and count $c_i$ are kept in a
data structure used to compute $\pmb{W}$.  The set of variables
selected for modification, called here the \textit{history}, is
maintained for each component $z_k$ of output $\pmb{z}$.  When the
value of $z_k$ changes, the reward is distributed between all of the
variables in the history, and the history is emptied. When $z_k$
does not change, the selected variable is penalized by zero reward.
This scheme, for the case of scalar output for simplicity, is shown
in Algorithm~\ref{alg:award} which expands
line~\ref{alg:adlmh-compute-w} of Algorithm~\ref{alg:adlmh}.  When
$\pmb{z}$ has multiple components, histories for each component are
maintained independently.

\begin{algorithm}
    \begin{algorithmic}[1]
        \STATE Append $x_k$ to the history of variables selected
		for modification.
        \IF {$\pmb{z}^{t+1} \ne \pmb{z}^{t}$}
			\STATE $w \gets \frac 1 {|history|}$
            \FOR {$x_m$ {\bf in} history}
				\STATE $\overline r_m \gets r_m + w,\; c_m \gets c_m + w$ \label{alg:award-reward}
            \ENDFOR
			\STATE Flush the history. \label{alg:award-flush}
        \ELSE
            \STATE $c_k \gets c_k + 1$ \label{alg:award-penalty}
        \ENDIF
    \end{algorithmic}
    \caption{Propagating Rewards to Variables}
    \label{alg:award}
\end{algorithm}

Rewarding all of the variables in the history ensures that while
variables which cause changes in the output more often get a greater
reward, variables with lower influence on the output are still selected
for modification sufficiently often. This, in turn, ensures ergodicity
of sampling sequence, and helps establish conditions for convergence to
the target distribution, as we discuss in Section~\ref{sec:convergence}.

Let us show that under certain assumptions the proposed reward
propagation scheme has a non-degenerate equilibrium for variable
selection probabilities.  Indeed, assume that for a program with
two variables, $x_1$, and $x_2$, {\it probability matching}, or
selecting a choice with the probability proportional to the
unit reward $\rho_i = \frac {r_i} {c_i}$, is used to compute
the weights, that is, $W_i=\rho_i$. Then, the following
lemma holds:
\begin{lmm}
	Assume that for variables $x_i$, where $i \in \{1, 2\}$:
\begin{itemize}
 \item $\alpha_i$ is the selection probability;
 \item $\beta_i$ is the probability that the new trace is accepted
     given that the variable was selected for modification;
 \item  $\gamma_i$ is the probability that the output changed
     given that the trace was accepted.
\end{itemize}
Assume further that $\alpha_i$, $\beta_i$, and $\gamma_i$ are constant.
Then for the case $\gamma_1=1$, $\gamma_2=0$:
\begin{equation}
	0 < \frac {\alpha_2} {\alpha_1} \le \frac 1 3 
    \label{eqn:lemma-rate}
\end{equation}
\label{thm:lemma-rate}
\end{lmm}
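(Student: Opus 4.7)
The plan is to view the equilibrium dynamics as a renewal process whose cycles are delimited by the (output-changing, history-flushing) $x_1$-acceptances, tally the per-cycle contributions to $r_i$ and $c_i$, and convert those into $\rho_i = r_i/c_i$ by the law of large numbers. Probability matching $W_i = \rho_i$ then converts the claim into an upper bound on $\rho_2$.

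First I would observe that, under constancy of $\alpha_i,\beta_i,\gamma_i$, an accepted trace is due to $x_1$ with probability $P_1 = \alpha_1\beta_1/(\alpha_1\beta_1 + \alpha_2\beta_2)$, so the number $n$ of consecutive $x_2$-acceptances between two $x_1$-acceptances is geometric on $\{0,1,2,\ldots\}$ with success probability $P_1$. Because $\gamma_2 = 0$, each of those $n$ events triggers the penalty branch of Algorithm~\ref{alg:award} (incrementing $c_2$ by $1$) and appends $x_2$ to the history; because $\gamma_1 = 1$, the terminating $x_1$-acceptance appends $x_1$ and then triggers the reward branch with $w = 1/(n+1)$, after which the history is flushed. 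One cycle thus contributes $\Delta r_1 = \Delta c_1 = 1/(n+1)$, $\Delta r_2 = n/(n+1)$, and $\Delta c_2 = n + n/(n+1) = n(n+2)/(n+1)$. By the law of large numbers applied to the i.i.d.\ cycles,
\begin{equation*}
\rho_1 = \frac{E[\Delta r_1]}{E[\Delta c_1]} = 1, \qquad
\rho_2 = \frac{E[\,n/(n+1)\,]}{E[\,n(n+2)/(n+1)\,]},
\end{equation*}
and $\alpha_2/\alpha_1 = \rho_2/\rho_1 = \rho_2$.

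I expect the bound on $\rho_2$ to be the only nontrivial step, but it collapses to a pointwise inequality on the cycle integrands: for every integer $n \ge 0$ one has $n(n+2) - 3n = n(n-1) \ge 0$, so $n(n+2)/(n+1) \ge 3\,n/(n+1)$ almost surely, and taking expectations gives $\rho_2 \le 1/3$. Strict positivity is immediate, since $\alpha_2,\beta_2 > 0$ imply $P(n \ge 1) > 0$ and hence $E[n/(n+1)] > 0$. Combining, $0 < \alpha_2/\alpha_1 = \rho_2 \le 1/3$, which is exactly~\eqref{eqn:lemma-rate}.
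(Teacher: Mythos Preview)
Your proposal is correct and shares the paper's renewal-cycle decomposition, per-cycle accounting of $\Delta r_i,\Delta c_i$, and law-of-large-numbers passage to $\rho_i = E[\Delta r_i]/E[\Delta c_i]$. The genuine difference is in how the upper bound is extracted. The paper evaluates the series $A=\sum_{k\ge 0}\frac{1}{k+1}(1-p_1)^k p_1$ by termwise integration, obtains the closed form $\rho_2=B(p_1)=\bigl(1+\tfrac{p_1\log p_1}{1-p_1}\bigr)\big/\bigl(\tfrac{1}{p_1}+\tfrac{p_1\log p_1}{1-p_1}\bigr)$, and then argues (deferring to an appendix) that $B$ is monotone on $[0,1]$ with $B(1)=\tfrac13$. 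Your route is shorter: the pointwise inequality $\Delta c_2-3\Delta r_2=n(n-1)/(n+1)\ge 0$ for every integer $n\ge 0$ gives $E[\Delta c_2]\ge 3\,E[\Delta r_2]$ directly, hence $\rho_2\le\tfrac13$, with no series summation or monotonicity analysis. What the paper's approach buys in exchange is the explicit expression $B(p_1)$, which pins down the equilibrium ratio for every $p_1$ rather than merely bounding it; since the lemma only asserts the bound, your argument suffices and is more elementary.

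One small remark on the lower bound: writing ``$\alpha_2,\beta_2>0$ imply $P(n\ge 1)>0$'' presupposes $\alpha_2>0$, which is part of what is being established at equilibrium. The paper handles this point by contradiction through the fixed-point relation ($\alpha_2/\alpha_1=0\Rightarrow p_1=1\Rightarrow B(p_1)=\tfrac13\ne 0$). You can phrase yours the same way: if $\alpha_2=0$ then $n\equiv 0$ and $\rho_2$ is the indeterminate $0/0$, so no consistent probability-matching equilibrium has $\alpha_2=0$.
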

\begin{proof}
	We shall proof the lemma in three steps. First, we shall
	analyse a sequence of samples between two subsequent
	arrivals of $x_1$. Then, we shall derive a
	formula for the expected unit reward of $x_2$. Finally, we shall
	bound the ratio $\frac {\alpha_2} {\alpha_1}$.  
	
	Consider a sequence of $k$ samples, for some $k$, between two subsequent
	arrivals of $x_1$, including the sample corresponding to the
	second arrival of $x_1$. Since a new value of $x_1$
	always ($\gamma_1=1$) and $x_2$ never ($\gamma_2=0$) causes
	a change in the output, at the end of the sequence the
	history will contain $k$ occurrences of $x_2$. Let us denote 
	by $\Delta r_i$, $\Delta c_i$ the increase of reward $r_i$ and
	count $c_i$ between the beginning and the end of the
	sequence. Noting that $x_2$ is penalized each time it is
	added to the history (line~\ref{alg:award-penalty} of
	Algorithm~\ref{alg:award}), and $k$ occurrences of $x_2$ are
	rewarded when $x_1$ is added to the history
	(line~\ref{alg:award-reward} of Algorithm~\ref{alg:award}),
	we obtain
	\begin{equation}
		\Delta r_1=\frac 1 {k+1},\,\Delta c_1=\frac 1 {k+1}\quad\Delta r_2=\frac k {k+1},\,\Delta c_2=k+\frac k {k+1}
		\label{eqn:lemma-rate-rewards}
	\end{equation}
	Consider now a sequence of $M$ such sequences. When
	$M\to\infty$, $\frac {r_{iM}} {c_{iM}}$ approaches the expected
	unit reward $\overline \rho_i$, where $r_{iM}$ and $c_{iM}$ are the reward
	and the count of $x_i$ at the end of the sequence.
	\begin{equation}
		\overline \rho_i = \lim_{M\to \infty} \frac {r_{iM}} {c_{iM}} = \lim_{M\to \infty} \frac {\frac {r_{iM}} M} {\frac {c_{iM}} M} = \lim_{M\to \infty} \frac {\frac {\sum_{m=1}^M \Delta {r_{im}}} M} {\frac {\sum_{m=1}^M \Delta {c_{im}}} M} = \frac {\overline {\Delta r_i}} {\overline {\Delta c_i}}
		\label{eqn:lemma-rate-mean-rho}
	\end{equation}

	Each variable $x_i$ is selected randomly and independently and
	produces an accepted trace with probability
	\begin{equation}
		p_i= \frac {\alpha_i\beta_i} {\alpha_1\beta_1 + \alpha_2\beta_2}.
		\label{eqn:lemma-rate-p}
	\end{equation}
	Acceptances of $x_1$ form a Poisson process with rate $\frac 1
	{p_1}= \frac {\alpha_1\beta_1+\alpha_2\beta_2}
	{\alpha_1\beta_1}$. $k$ is distributed according to the
	geometric distribution with probability $p_1$, $\Pr[k] = (1-p_1)^kp_1$.
	Since $\Delta r_1 = \Delta c_1$ for any $k$, the expected unit reward $\overline
	\rho_1$ of $x_1$ is $1$. We shall substitute $\overline {\Delta
	r_i}$ and $\overline {\Delta c_i}$ into
	(\ref{eqn:lemma-rate-mean-rho}) to obtain the expected unit
	reward $\overline \rho_2$ of $x_2$:
	\begin{align}
		\overline {\Delta r_2} =& \sum_{k=0}^\infty \frac k {k+1} (1-p_1)^kp_1\nonumber\\
		\overline {\Delta c_2} =& \sum_{k=0}^\infty \left(k+\frac k {k+1}\right) (1-p_1)^kp_1=\underbrace{\frac {1-p_1} {p_1}}_{\overline k} + \sum_{k=0}^\infty  \frac k {k+1} (1-p_1)^kp_1
		\label{eqn:lemma-rate-delta}
	\end{align}
	\begin{align}
		\overline \rho_2 = &\frac {\overline {\Delta r_2}} {\overline {\Delta c_2}} = \frac {\sum\limits_{k=0}^\infty \frac k {k+1} (1-p_1)^kp_1} {\frac {1-p_1} {p_1} +  \sum\limits_{k=0}^\infty  \frac k {k+1} (1-p_1)^kp_1} = \frac {1-\overbrace{\sum_{k=0}^\infty \frac 1 {k+1} (1-p_1)^kp_1}^A} {\frac {1} {p_1} -  \underbrace{\sum_{k=0}^\infty  \frac 1 {k+1} (1-p_1)^kp_1}_A}
		\label{eqn:lemma-rate-expected-reward}
	\end{align}
	For probability matching, selection probabilities are
	proportional to expected unit rewards:
	\begin{equation}
		\frac {\alpha_2} {\alpha_1} = \frac {\overline \rho_2} {\overline \rho_1}
		\label{eqn:lemma-rate-matching}
	\end{equation}
	To prove the inequality, we shall derive a closed-form
	representation for $\overline \rho_2$, and analyse solutions of
	(\ref{eqn:lemma-rate-matching}) for $\frac {\alpha_2}
	{\alpha_1}$. We shall eliminate the summation $A$ in ($\ref{eqn:lemma-rate-expected-reward}$):
	\begin{align}
		A=&\sum_{k=0}^\infty  \frac k {k+1} (1-p_1)^kp_1=\frac {p_1} {1-p_1} \sum_{k=0}^\infty \frac 1 {k+1} (1-p_1)^{k+1}\nonumber\\
		=&\frac {p_1} {1-p_1} \sum_{k=0}^\infty \int_{p_1}^1 (1-\xi)^k d\xi = \frac{p_1} {1-p_1} \int_{p_1}^1 \sum_{k=0}^\infty(1-\xi)^kd\xi=-\frac {p_1} {1-p_1} \log p_1
		\label{eqn:lemma-rate-A}
	\end{align}
	By substituting $A$ into (\ref{eqn:lemma-rate-expected-reward}), and then
	$\overline \rho_1$ and $\overline \rho_2$ into
	(\ref{eqn:lemma-rate-matching}), we obtain 
	\begin{equation}
	\frac {\alpha_2} {\alpha_1} = \frac {\overline \rho_2} {\overline \rho_1} = \overline \rho_2 =  \left. \frac {1+\frac {p_1 \log p_1} {1-p_1}} {\frac 1 {p_1}+\frac {p_1 \log p_1} {1-p_1}} \right\rbrace B(p_1)
		\label{eqn:lemma-rate-equilibrium}
	\end{equation}
	The right-hand side $B(p_1)$ of
	(\ref{eqn:lemma-rate-equilibrium}) is a monotonic function
	for $p_1 \in [0, 1]$, and $B(0)=0$, $B(1)=\frac 1 3$ (see
	Appendix for the analysis of $B(p_1)$). According to
	(\ref{eqn:lemma-rate-p}), $\frac {\alpha_2} {\alpha_1}=0$
	implies $p_1=1$, hence $\frac {\alpha_2} {\alpha_1}\ne 0$,
	and $0 < \frac {\alpha_2} {\alpha_1} \le \frac 1 3$. \qed
\end{proof}

By noting that any subset of variables in a probabilistic program can be
considered a single random variable drawn from a multi-dimensional
distribution, Lemma~\ref{thm:lemma-rate} is generalized to any
number of variables by Corollary~\ref{thm:corollary-partitioning}:
\begin{crl}
    For any partitioning of the set $\pmb{x}$  of random variables of a
	probabilistic program, AdLMH with weights proportional to expected unit
	rewards selects variables from each of the partitions with non-zero
	probability.
    \label{thm:corollary-partitioning}
\end{crl}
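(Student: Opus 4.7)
The plan is to reduce the general case to the two-variable setting already handled in Lemma~\ref{thm:lemma-rate}. Fix an arbitrary partition $\{P_1, \ldots, P_m\}$ of the variables $\pmb{x}$, fix an index $i$, and aim to show that the cumulative selection probability $\sum_{j \in P_i} \alpha_j$ is strictly positive. Following the remark preceding the corollary, I would aggregate the variables within $P_i$ into a single composite variable and the variables in $\pmb{x}\setminus P_i$ into another, collapsing the $m$-way problem into a two-arm problem to which the lemma directly applies.

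Under this aggregation, define $\tilde{\alpha}_{P_i} = \sum_{j \in P_i} \alpha_j$, let $\tilde{\beta}_{P_i}$ be the acceptance probability conditional on selecting some variable from $P_i$, and let $\tilde{\gamma}_{P_i}$ be the probability that the output changes given that such a proposal is accepted. Because Algorithm~\ref{alg:award} updates the per-variable rewards $r_j$ and counts $c_j$ additively, assigning the composite variable the total $\sum_{j \in P_i} r_j$ and $\sum_{j \in P_i} c_j$ makes probability matching at the composite level consistent with probability matching at the individual level: the aggregated unit reward is exactly the count-weighted average of the individual unit rewards, and hence $\tilde{W}_{P_i} \propto \tilde{\alpha}_{P_i}$ in steady state.

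The worst case for $P_i$ is $\tilde{\gamma}_{P_i} = 0$ together with $\tilde{\gamma}_{\pmb{x} \setminus P_i} = 1$, which is precisely the hypothesis of Lemma~\ref{thm:lemma-rate}. The lemma then yields $0 < \tilde{\alpha}_{P_i}/\tilde{\alpha}_{\pmb{x} \setminus P_i} \le \tfrac{1}{3}$, and in particular $\tilde{\alpha}_{P_i} > 0$. Any less adversarial value of $\tilde{\gamma}_{P_i} > 0$ can only enlarge this ratio, because a nonzero $\tilde{\gamma}_{P_i}$ means $P_i$ itself accrues direct rewards. Since $i$ was arbitrary, every element of the partition is selected with non-zero probability.

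The main obstacle I expect is making the aggregation argument airtight. Lemma~\ref{thm:lemma-rate} assumes $\alpha_i$, $\beta_i$, $\gamma_i$ are constant, and one must verify that these constants still exist, and are well-defined, after collapsing a potentially large subset of variables into a single effective arm whose internal composition may drift in time. The natural way to close this gap is to observe that the reward/count bookkeeping of Algorithm~\ref{alg:award} is linear in per-variable contributions, so the composite dynamics are equivalent to running the algorithm on a two-arm system with the aggregated parameters; once this equivalence is spelled out, the inequality is inherited from the lemma without further work.
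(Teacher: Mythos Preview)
Your proposal is correct and follows essentially the same route as the paper: the paper's entire justification for the corollary is the one-sentence observation immediately preceding it, namely that any subset of variables can be regarded as a single multi-dimensional random variable, thereby reducing the general partition to the two-variable setting of Lemma~\ref{thm:lemma-rate}. Your write-up is in fact more explicit than the paper's, since you spell out the aggregated $\tilde\alpha$, $\tilde\beta$, $\tilde\gamma$, argue the worst case $\tilde\gamma_{P_i}=0$, and flag the constancy assumption as the point that needs care---none of which the paper addresses beyond the single remark.
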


To ensure convergence of $\pmb{W}^{t}$ to expected unit rewards in the stationary
distribution, we use upper confidence bounds on unit rewards to compute the variable
selection probabilities, an idea which we borrowed from the UCB family
of algorithms for multi-armed bandits~\cite{ANF02}. 
Following UCB1~\cite{ANF02}, we
compute the upper confidence bound $\hat \rho_i$ as the sum of the unit
reward and the exploration term
\begin{equation}
    \hat \rho_i = \rho_i + C\sqrt {\frac {\log \sum_{i=1}^{|\pmb{x}|}c_i} {c_i}}
    ,
\label{eqn:ucb-reward}
\end{equation}
where $C$ is an exploration factor. The default value for $C$ is $\sqrt 2$
in UCB1; in practice, a lower value of $C$ is preferable.
Note that variable selection in Adaptive LMH is different
from arm selection in multi-armed bandits: unlike in bandits,
where we want to sample the best arm at an increasing rate, in
Adaptive LMH we expect $\pmb{W}^t$ to converge to an equilibrium in which
selection probabilities are proportional to expected unit rewards.

\section{Convergence of Adaptive LMH}
\label{sec:convergence}

As adaptive MCMC algorithms may depend arbitrarily on the history
at each step, showing that a given sampler correctly draws from the
target distribution can be non-trivial. 
General conditions under which adaptive MCMC schemes are still
ergodic, in the sense that the distribution of samples converges to
the target $\pi$ in total variation, are established in \cite{RR07}.
The fundamental criteria for validity of an adaptive algorithm are
{\em diminishing adaptation}, which (informally) requires that the
amount which the transition operator changes each iteration must
asymptotically decrease to zero;
and {\em containment}, a technical condition which requires that
the time until convergence to the target distribution must be bounded
in probability \cite{BRR11}.

The class of models representable by probabilistic programs is very
broad, allowing specification of completely arbitrary target densities;
however, for many models the adaptive LMH algorithm reduces to an
adaptive random scan Metro\-polis-within-Gibbs in Algorithm~\ref{alg:amh}. 
To discuss when this is the case, we invoke 
the concept of {\em structural} versus {\em structure-preserving} 
random choices \cite{YHG14}. Crucially, a {\em structure-preserving} 
random choice $x_k$ does not affect the existence of other $x_m$
 in the trace. 

Suppose we were to restrict the expressiveness of our language to admit only 
programs with no structural random choices: in such a language, the LMH 
algorithm in Algorithm~\ref{alg:adlmh} reduces to the adaptive componentwise 
MH algorithm. Conditions under which such an adaptive 
algorithm is ergodic have been established explicitly in \cite[Theorems 4.10 and 5.5]{LRR13}. 
Given suitable assumptions on the target density defined by the program, 
it is necessary for the probability vector $||\alpha^t - \alpha^{t-1}|| \rightarrow 0$, 
and that for any particular component $k$ we have probability $\alpha^t_k > \epsilon > 0$. 
Both of these are satisfied by our approach: from Corollary~\ref{thm:corollary-partitioning},
we ensure that the unit reward across each $x_i$ converges to a positive fixed point.

While any theoretical result will  require language restrictions
such that programs only induce distributions satisfying regularity
conditions, we conjecture that this scheme is broadly applicable across
most non-pathological programs.
We leave a precise theoretical analysis of
the space of probabilistic programs in which adaptive MCMC 
schemes (with infinite adaptation) may be ergodic to future work.
Empirical evaluation presented in the next section demonstrates
practical convergence of Adaptive LMH on a range of inference
examples, including programs containing structural random choices.

\section{Empirical Evaluation}

We evaluated Adaptive LMH on many probabilistic programs and observed consistent
improvement of convergence rate compared to LMH. We also verified on a number
of tests that the algorithm converges to the correct distribution obtained by 
independent exact methods. In this section, we compare Adaptive LMH to LMH
on several representative examples of
probabilistic programs.  The rates in the comparisons are presented
with respect to the number of samples, or simulations, of the
probabilistic programs. The additional computation required for
adaptation takes negligible time, and the computational effort per
sample is approximately the same for all algorithms. Our
implementation of the inference engine is available at
\url{https://bitbucket.org/dtolpin/embang}.

In the following case studies differences
between program outputs and target distributions are presented using
KullBack-Leibler (KL) divergence, Kolmogorov-Smirnov (KS) distance,
or L2 distance, as appropriate. In cases where target distributions
cannot be updated exactly, they were approximated by running a
non-adaptive inference algorithm for a long enough time and with a
sufficient number of restarts.  In each of the evaluations, all of
the algorithms were run with 25 random restarts and 500\,000
simulations of the probabilistic program per restart. The difference
plots use the logarithmic scale for both axes. In the plots, the
solid lines correspond to the median, and the dashed lines to 25\%
and 75\% percentiles, taken over all runs of the corresponding
inference algorithm. The exploration factor for computing upper
confidence bounds on unit rewards (Equation~\ref{eqn:ucb-reward})
was fixed at $C=0.5$ for all tests and evaluations.

The first example is a latent state
inference problem in an HMM with three states, one-dimensional
normal observations (0.9, 0.8, 0.7, 0, -0.025, 5, 2, 0.1, 0, 0.13,
0.45, 6, 0.2, 0.3, -1, -1) with variance 1.0, a known transition
matrix, and known initial state distribution. There are 18 distinct
random choices in all traces of the program, and the
0th and the 17th state of the model are predicted. The results of
evaluation are shown in Figure~\ref{fig:hmm-x} as KL divergences
between the inference output and the ground truth obtained using the
forward-backward algorithm. In addition, bar plots of unit reward and
sample count distributions among random choices in Adaptive LMH are
shown for  $1000$, $10\,000$, and $100\,000$ samples.

\begin{figure}[t]
	\begin{minipage}[c]{0.45\linewidth}
		\centering
		\includegraphics[scale=0.4]{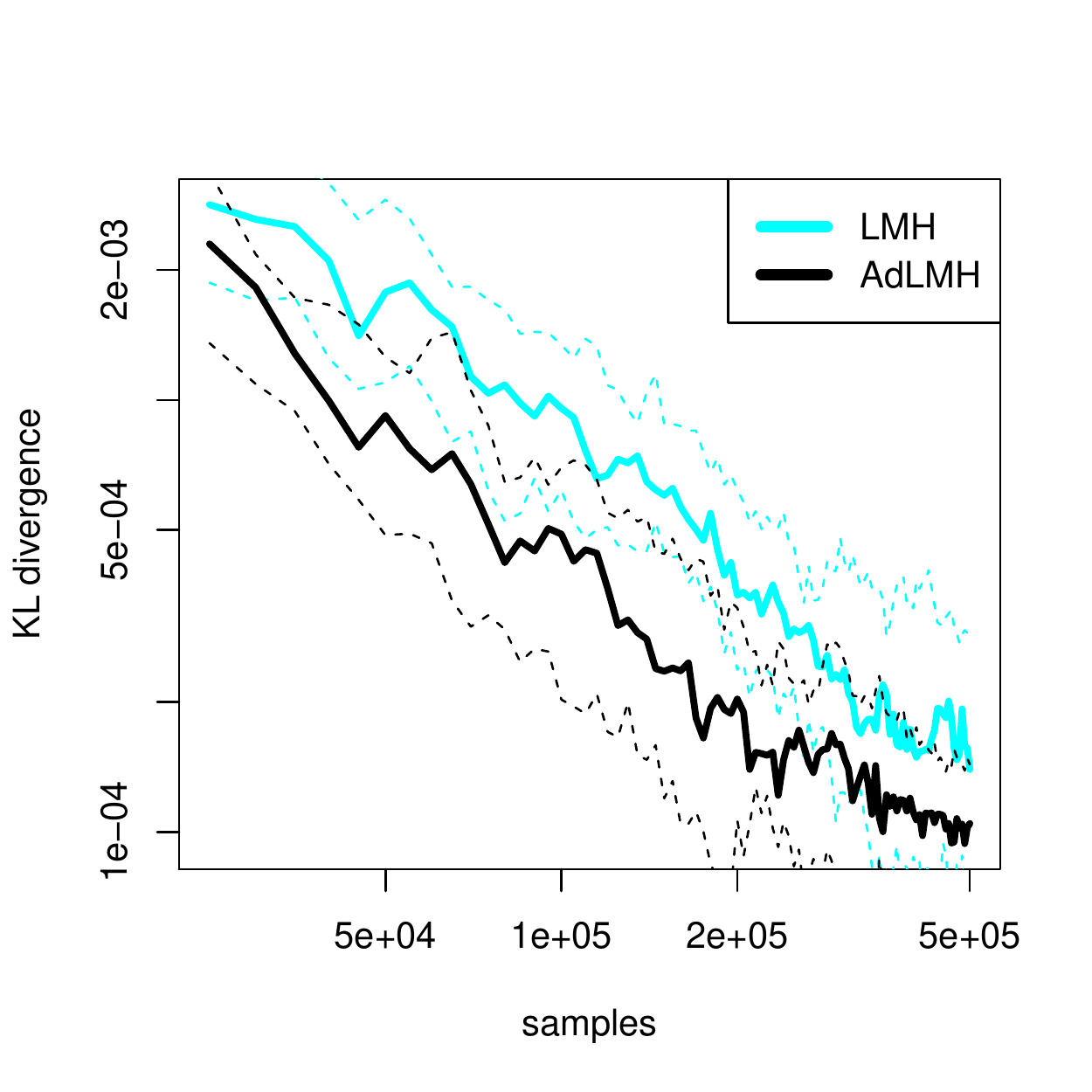} \\
	\end{minipage}
	\begin{minipage}[c]{0.55\linewidth}
		\centering
		\includegraphics[scale=0.3]{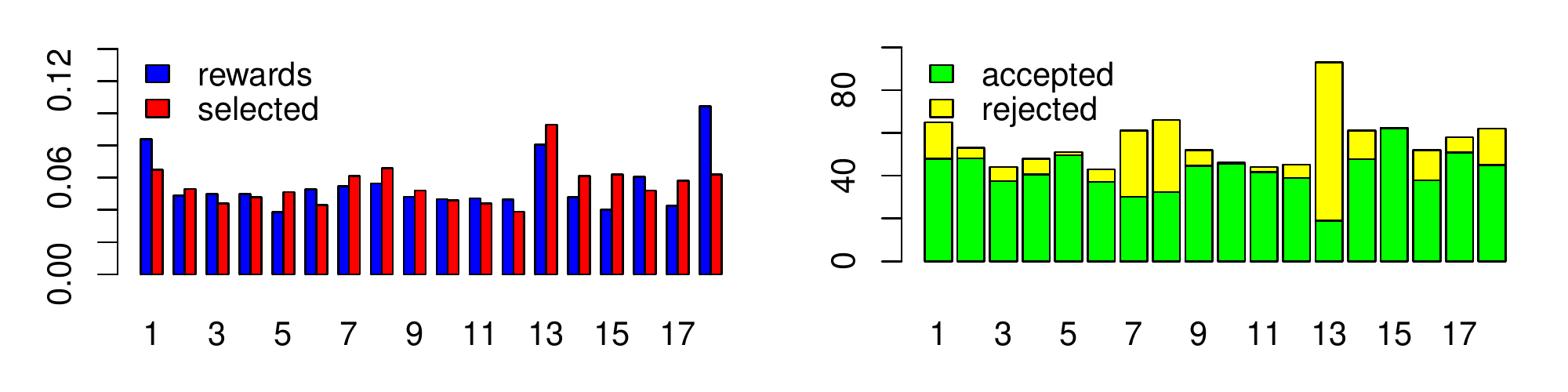} \\
		{\scriptsize 1000 samples} \\
		\includegraphics[scale=0.3]{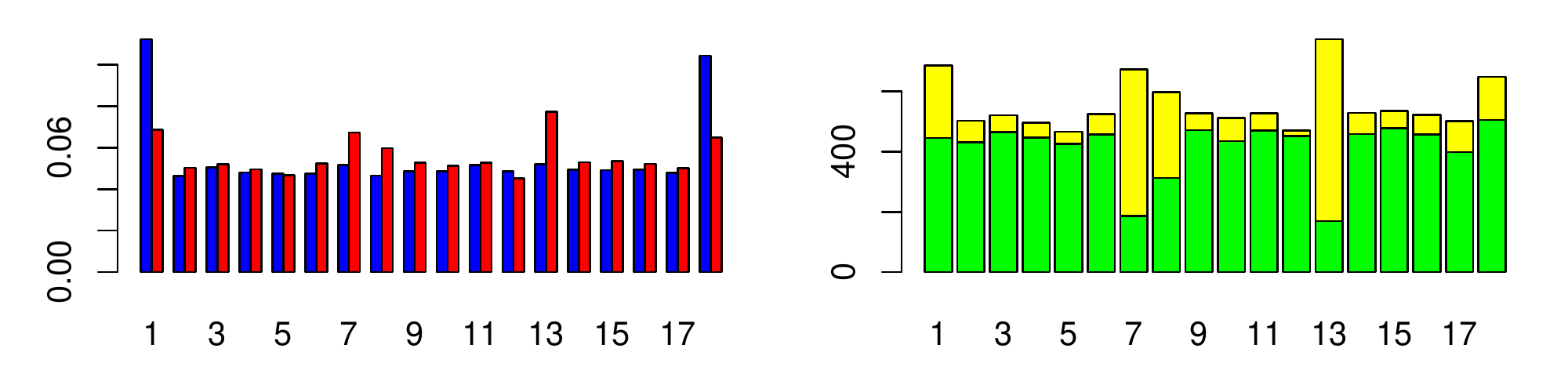} \\
		{\scriptsize 10000 samples} \\
		\includegraphics[scale=0.3]{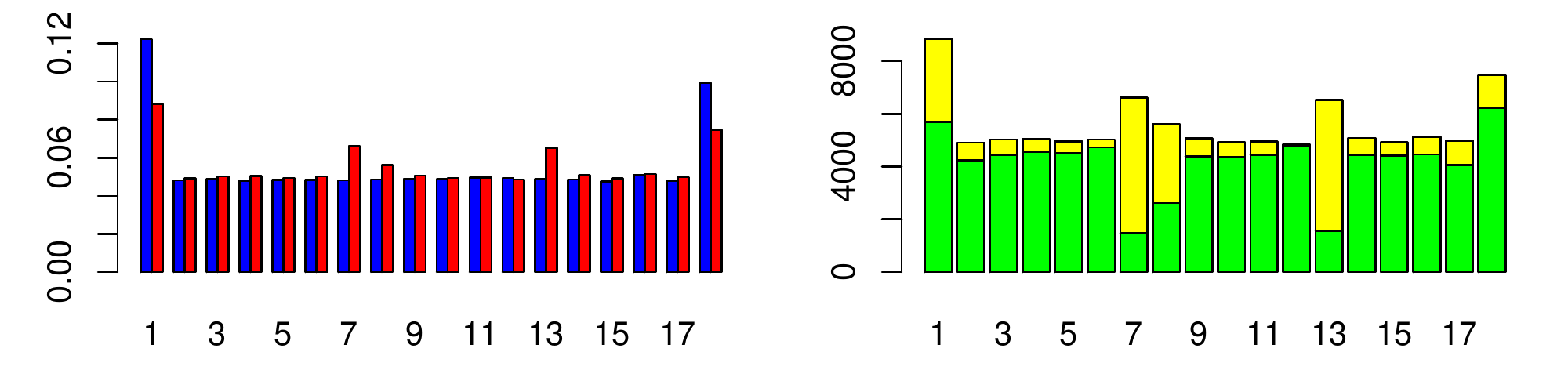} \\
		{\scriptsize 100000 samples} \\
	\end{minipage}
	\caption{HMM, predicting the 0th and 17th state}
	\label{fig:hmm-x}
\end{figure}

As can be seen in the plots, Adaptive LMH (black) exhibits
faster convergence over the whole range of evaluation, requiring
half as many samples as LMH (cyan) to achieve the same approximation, with
the median of LMH above the 75\% quantile of Adaptive LMH.

In addition, the bar plots show unit rewards and sample counts for
different random choices, providing an insight on the adaptive
behavior of AdLMH. On the left-hand bar plots, red bars are
normalized unit rewards, and blue bars are normalized sample
counts. On the right-hand bar plots, the total height of a bar is
the total sample count, with green section corresponding to the
accepted, and yellow to the rejected samples. At $1\,000$ samples,
the unit rewards have not yet converged, and exploration supersedes
exploitation: random choices with lower acceptance rate are selected
more often (choices 7, 8 and 13 corresponding to states 6, 7 and
12). At $10\,000$ samples, the unit rewards become close to their final
values, and choices 1 and 18, immediately affecting the predicted
states, are selected more often.  At $100\,000$ samples, the
unit rewards converge, and the sample counts correspond closely to
the equilibrium state outlined in Lemma~\ref{thm:lemma-rate}.

The second case study is estimation of
hyperparameters of a Gaussian Process. We define a Gaussian Process
of the form
\begin{align*}
    f\sim&\mathcal{GP}(m,k),\\
    \mbox{where }m(x)=&ax^2+bx+c,\quad k(x,x′)=de^{−\frac {(x - x')^2} {2g}}.
\end{align*}
The process has five hyperparameters, $a, b, c, d, g$. The program
infers the posterior values of the hyperparameters by maximizing
marginal likelihood of 6 observations $(0.0,0.5)$, $(1.0,0.4)$,
$(2.0,0.2)$, $(3.0,-0.05)$, $(4.0,-0.2)$, and $(5.0,0.1)$.
Parameters $a, b, c$ of the mean function are predicted.  Maximum
of KS distances between inferred distributions of each of the
predicted parameters and an approximation of the target distributions
is shown in Figure~\ref{fig:gp}. The approximation was obtained by
running LMH with $2\,000\,000$ samples per restart and 50 restarts,
and then taking each 100th sample from the last $10\,000$ samples of
each restart, 5000 samples total. Just as for
the previous case study, bar plots of unit rewards and sample counts are
shown for  $1000$, $10\,000$, and $100\,000$ samples. 

\begin{figure}[t]
	\begin{minipage}[c]{0.45\linewidth}
		\centering
		\includegraphics[scale=0.4]{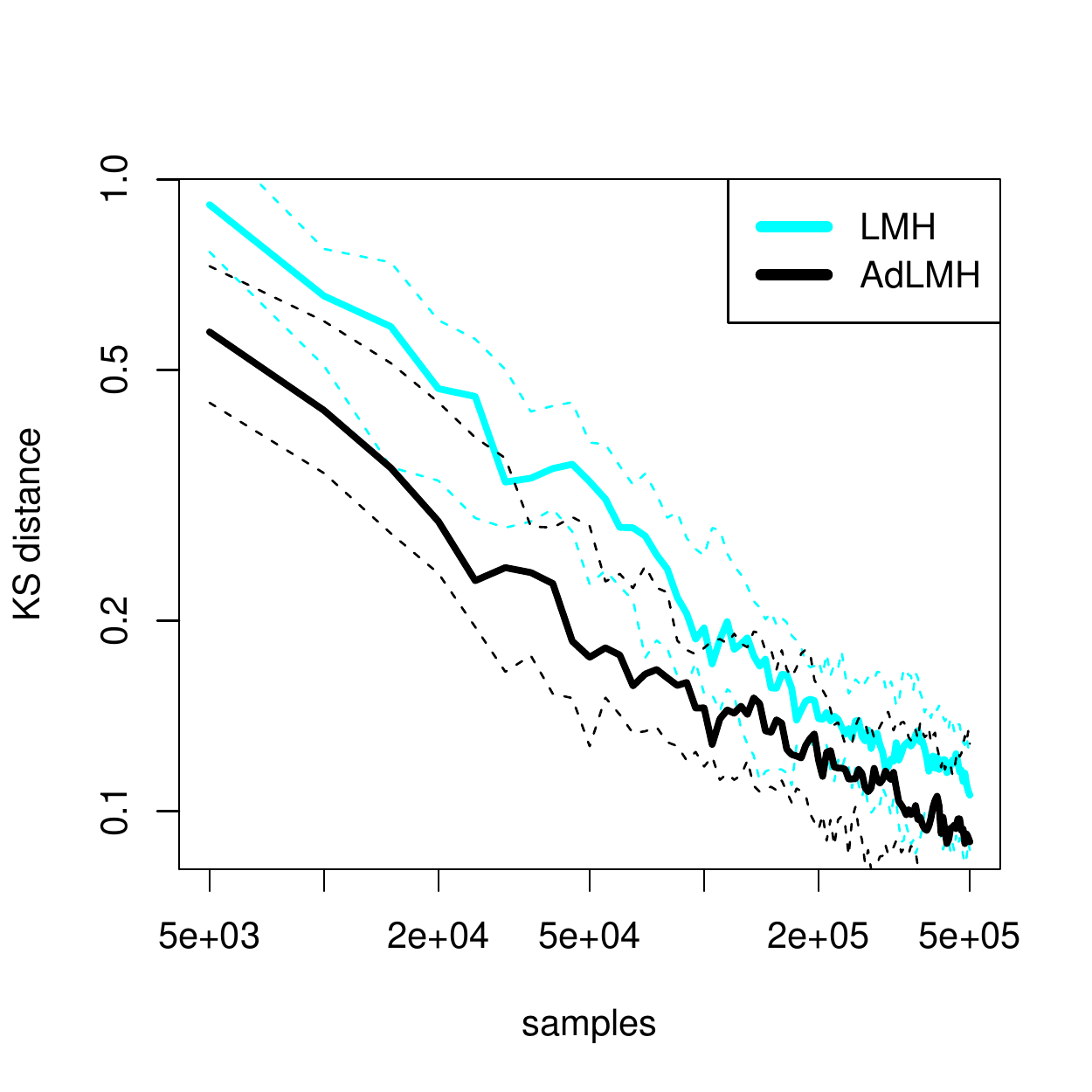} \\
	\end{minipage}
	\begin{minipage}[c]{0.55\linewidth}
		\centering
		\includegraphics[scale=0.3]{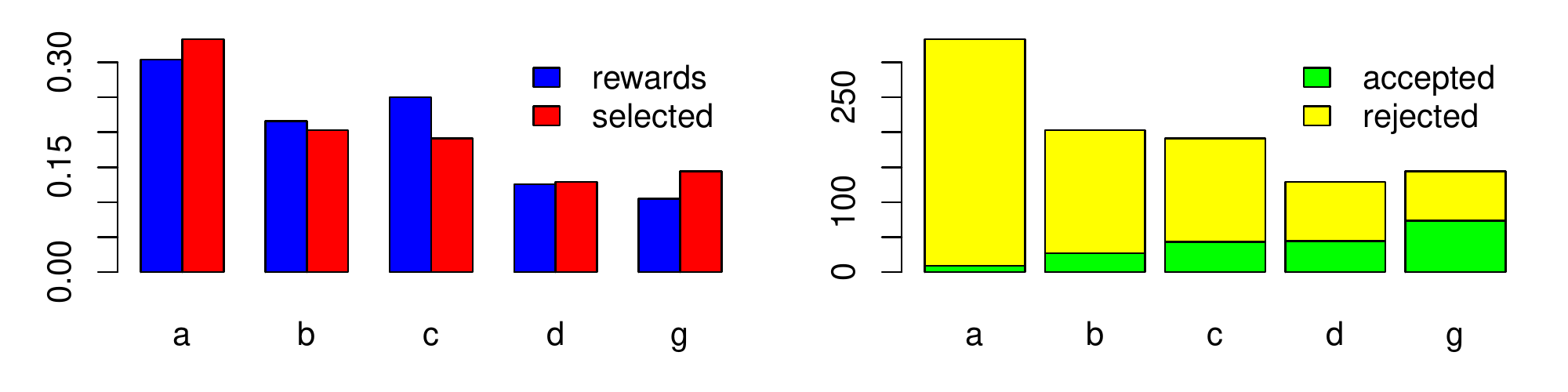} \\
		{\scriptsize 1000 samples} \\
		\includegraphics[scale=0.3]{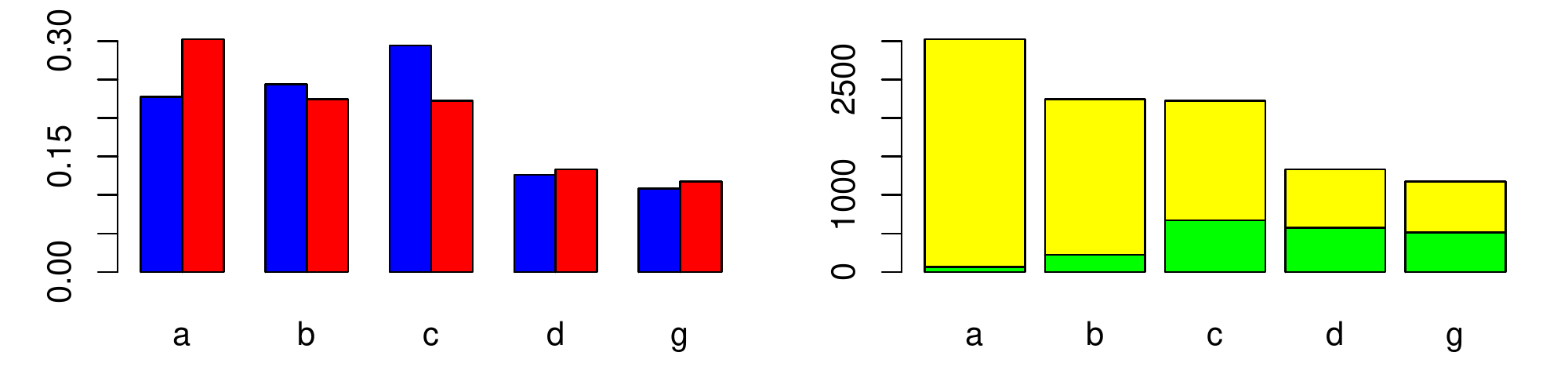} \\
		{\scriptsize 10000 samples} \\
		\includegraphics[scale=0.3]{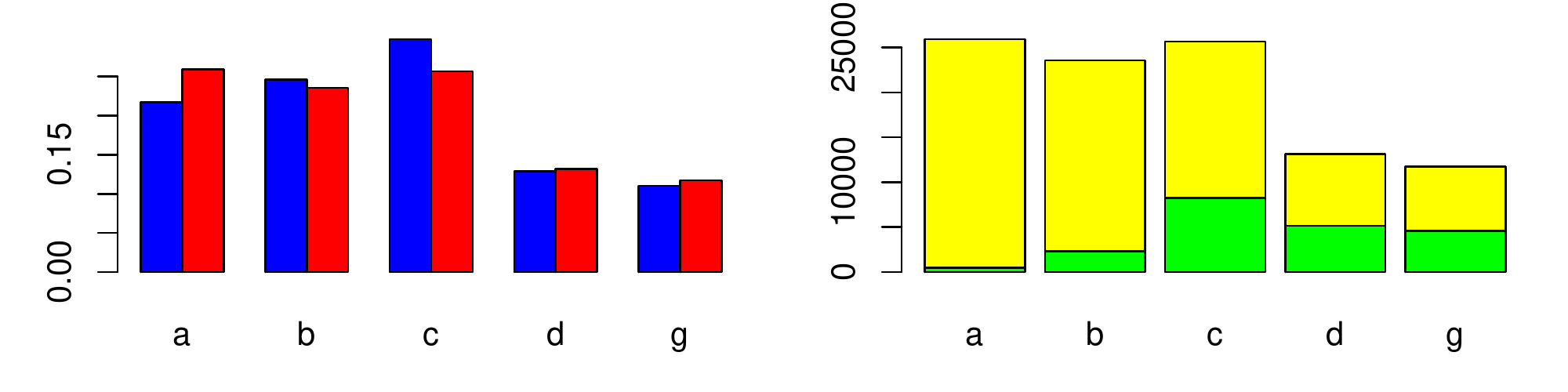} \\
		{\scriptsize 100000 samples}
	\end{minipage}
\caption{Gaussian process hyperparameter estimation}
\label{fig:gp}
\end{figure}

Here as well, Adaptive LMH (black) converges faster over the whole
range of evaluation, outperforming LMH by a factor 2 over the first
$50\,000$ samples. Bar plots of unit rewards and sample counts for
different number of choices, again, show the dynamics of sample
allocation among random choices. Choices $a$, $b$, and $c$ are
predicted, while choices $d$ and $g$ are required for inference but
not predicted.  Choice $a$ has the lowest acceptance rate (ratio
between the total height of the bar and the green part on the
right-hand bar plot), but the unit reward is close the unit reward of
choices $b$ and $c$. At $1\,000$ samples, choice $a$ is selected with
the highest probability. However, close to the converged state, at
$100\,000$ samples, choices $a$, $b$, and $c$ are selected with
similar probabilities. At the same time, choices 4 and 5 are selected
with a lower probability. Both the exploration-exploitation dynamics
for choices $a$--$b$ and probability matching of selection
probabilities among all choices secure improved convergence.

The third case study involves a
larger amount of data observed during each simulation of a
probabilistic program. We use the well-known Iris dataset~\cite{L96}
to fit a model of classifying a given flower as of the species Iris
setosa, as opposite to either Iris virginica or Iris versicolor.
Each record in the dataset corresponds to an observation. For each
observation, we define a feature vector $\vec{x}$  and an indicator
variable $z_i$, which is 1 if and only if the observation is of an
Iris setosa. We fit the model with five regression coefficients $\beta_1,\ldots,\beta_5$,
defined as
\begin{align*}
    \sigma^2 &\sim \mathrm{Inv\-Gamma}(1,1),\\
    \beta_j &\sim \mathrm{Normal}(0, \sigma), \\
    p(z_i=1) &= \frac{1}{1 + e^{-\beta^T{x}}}.
\end{align*}
To assess the convergence, we perform shuffle split leave-2-out cross validation on the
dataset, selecting one instance belonging to the species Iris setosa and one
belonging to a different species for each run of the inference algorithm.
The classification error is shown in Figure~\ref{fig:lr-iris} over 100 runs of LMH
and Adaptive LMH.

\begin{figure}[t]
	\centering
	\includegraphics[scale=0.45]{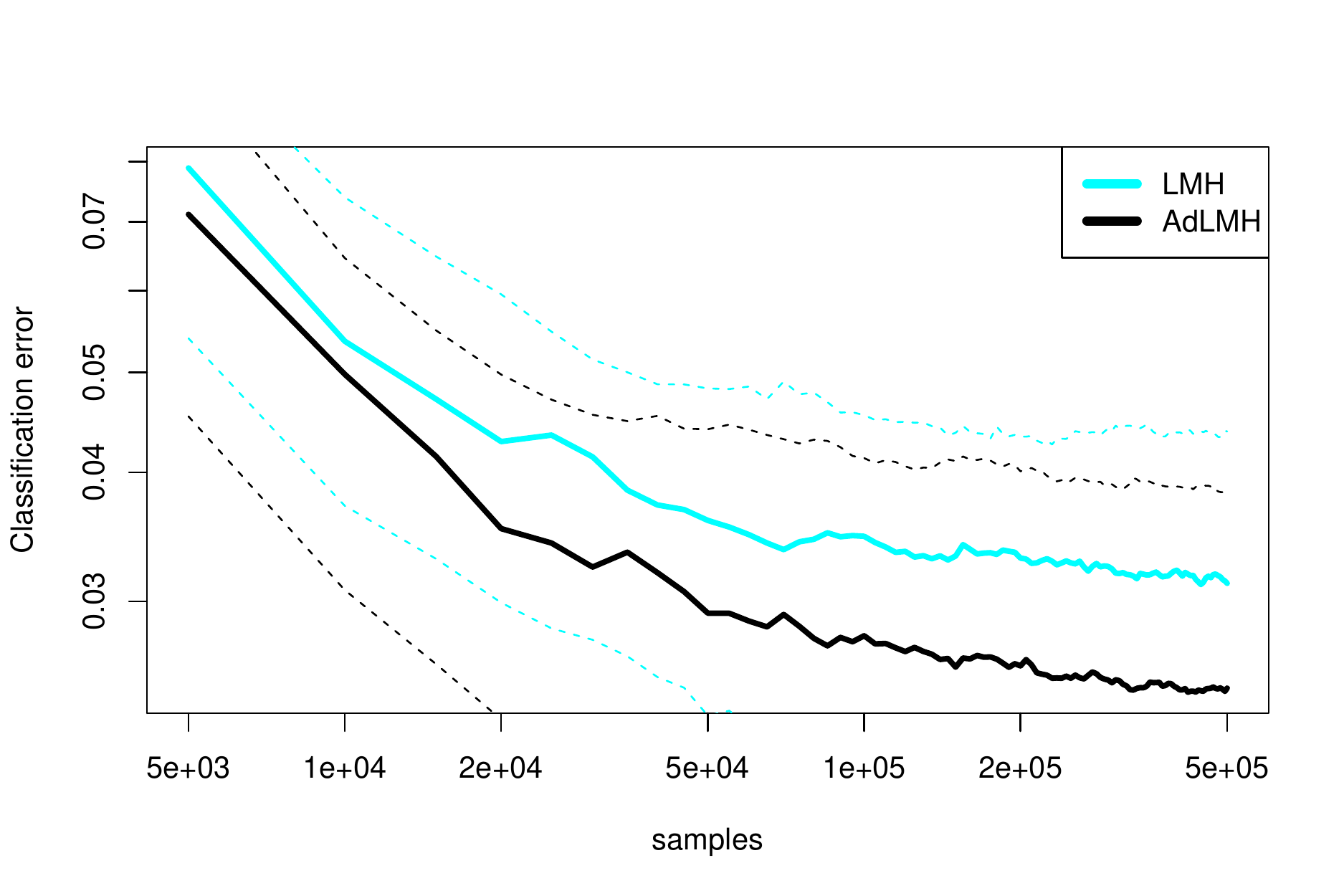} \\
	\caption{Logistic regression on Iris dataset.}
	\label{fig:lr-iris}
\end{figure}
The results are consistent with other case studies: Adaptive LMH
exhibits a faster convergence rate, requiring half as many samples
to achieve the same classification accuracy as LMH.

\begin{figure}[t]
	\centering
    \includegraphics[scale=0.4]{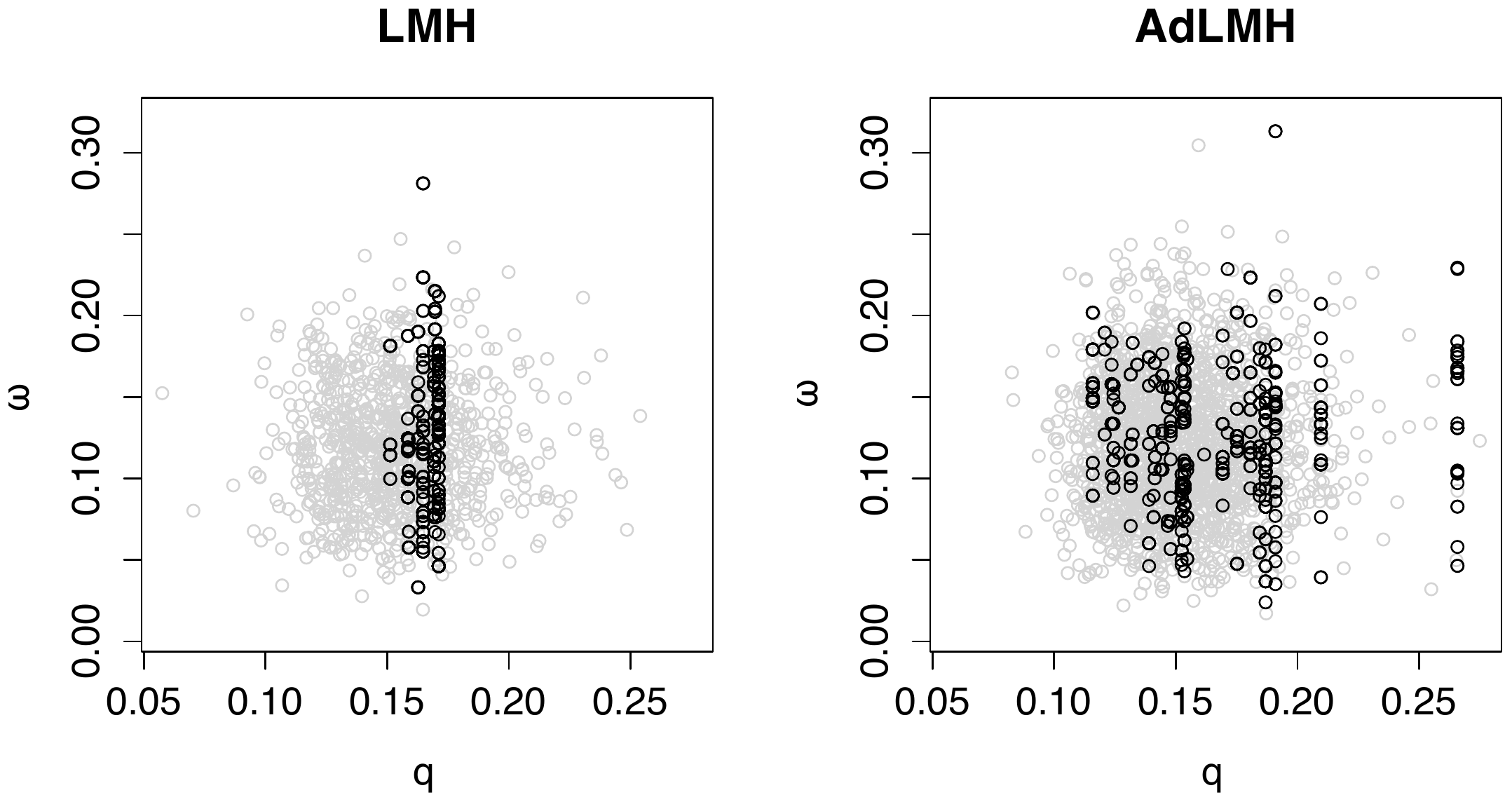} 
	\caption{Kalman filter, 500 samples after 10\,000 samples of
	burn-in.}
\label{fig:kalman-filter}
\end{figure}

As a final case study we consider a linear dynamical system (i.e. a Kalman smoothing problem) that was previously described in \cite{MYM+15}
\begin{align*}
    \vec{x}_t 
    &\sim 
    {\rm Norm}(\vec{A} \cdot \vec{x}_{t-1}, \vec{Q})
    ,
    &
    \y_t 
    &\sim
    {\rm Norm}(\vec{C} \cdot \vec{x}_{t}, \vec{R})
    .
\end{align*}
In this problem we assume that 16-dimensional observations $\vec{y}_t$ are conditioned on 2-dimensional latent states $\vec{z}_t$. 
We impose additional structure by assuming that the transition matrix $\vec{A}$ is a simple rotation with angular velocity $\omega$, whereas the transition covariance $\vec{Q}$ is a diagonal matrix with coefficient $q$,
\begin{align*}
    \vec{A} 
    &= 
    \left[
        \begin{array}{cc}
            ~\cos \omega~
            & 
            ~-\sin \omega~
            \\
            ~\sin \omega~
            &
            ~\cos \omega~
        \end{array}
    \right]
    ,
    &
    \vec{Q}
    &=
    \left[
        \begin{array}{cc}
            ~q~
            & 
            0
            \\
            0
            &
            ~q~
        \end{array}
    \right]
    .
\end{align*}
We predict posterior values for $\omega$, and $q$ in a setting where $\vec{C}$ and $\vec{R}$ are assumed known, under mildly informative priors $\omega \sim {\rm Gamma}(10,2.5)$ and $q \sim {\rm Gamma}(10,100)$.
Posterior inference is performed conditioned on a simulated sequence $\vec{y}_{1:T}$ of $T=100$ observations, with $\omega^* = 4 \pi / T$, and $q^*=0.1$. The observation matrix $\vec{C}$ and covariance $\vec{R}$ are sampled row-wise from symmetric Dirichlet distributions with  parameters $c=0.1$, and $r=0.01$ respectively.

Figure \ref{fig:kalman-filter} shows a qualitative evaluation the mixing rate in the form of 500 consecutive samples $(\omega,q)$ from an LMH and AdLMH chain after 10\,000 samples of burn-in. The LMH sequence exhibits good mixing over $\omega$ but is strongly correlated in $q$, whereas the AdLMH sequence obtains a much better coverage of the space.

To summarize, Adaptive LMH consistently attained faster convergence
than LMH, measured by differences between the ongoing output distribution of
the random program and the target independently obtained
distribution, assessed using various metrics. Variable selection probabilities
computed by Adaptive LMH are dynamically adapted during the inference, combining
exploration of the model represented by the probabilistic program and
exploitation of influence of random variables on program output. 

\section{Contribution and Future Work}

In this paper we introduced a new algorithm, Adaptive LMH,
for approximate inference in probabilistic programs. This
algorithm adjusts sampling parameters based on the output of the
probabilistic program in which the inference is
performed. Contributions of the paper include
\begin{itemize}
  \item A scheme of rewarding random choice based on program
    output.
  \item An approach to propagation of choice rewards to 
    MH proposal scheduling parameters.
  \item An application of this approach to LMH, where the
    probabilities of selecting each variable for modification
    are adjusted.
\end{itemize}
Adaptive LMH was compared to LMH, its non-adaptive counterpart, and was
found to consistently outperform LMH on several probabilistic
programs, while still being almost as easy to implement. The time cost
of additional computation due to adaptation was negligible.

Although presented in the context of a particular sampling algorithm,
the adaptation approach can be extended to other sampling methods. We
believe that various sampling algorithms for probabilistic programming
can benefit from output-sensitive adaptation. Additional potential
for improvement lies in acquisition of dependencies between predicted
expressions and random variables. Exploring alternative approaches
for guiding exploration-exploitation compromise, in particular,
based on Bayesian inference, is another promising research
direction.

Overall, output-sensitive approximate inference appears to bring clear
advantages and should be further explored in the context of
probabilistic programming models and algorithms.

\section{Acknowledgments}
 
This work is supported under DARPA PPAML through the U.S. AFRL under
Cooperative Agreement number FA8750-14-2-0004.

\bibliographystyle{splncs03}
\bibliography{refs}


\end{document}